\documentclass[journal]{IEEEtran}
\usepackage{amsmath,bm,amsfonts,amssymb, amsthm}
\usepackage{graphicx}
\usepackage{subcaption}
 \usepackage{url}
\usepackage{algorithm}
\usepackage[font=small,labelfont=bf]{caption}
\usepackage{hyperref}
\usepackage{color,xcolor,ucs}
\usepackage{comment}
\usepackage[noend]{algpseudocode}%

\newcommand{\B}{\mathbf}
\newcommand{\C}{\mathcal}
\newcommand{\R}{\mathrm}

\newcommand{\RR}[1]{{\color{black} #1}} 
\newcommand{\TYPO}[1]{{\color{black} #1}} 
\newcommand{\SHORT}[1]{{\color{black} #1}} 

\newcommand{\MR}[1]{{\color{black} #1}} 

\newtheorem{lemma}{Lemma}

\theoremstyle{definition}
\newtheorem{definition}{Definition}

\definecolor{mygray}{gray}{0.6}

\begin{document}

\title{\RR{Towards Kinematic} Actionable Infeasibility \RR{Detection} in Motion Planning}
\vspace{-5mm}
\author{Aayush Rath, Lakshya Jindal, and Antony Thomas
\thanks{The authors are with the Robotics Research Center, IIIT Hyderabad, Hyderabad 500032, India. email: {\tt\footnotesize anusandhaan1@gmail.com, antony.thomas@iiit.ac.in, lakshya.jindal@research.iiit.ac.in}}
}

\IEEEaftertitletext{\vspace{-2.5em}} 
\maketitle
\begin{abstract}
Motion planning in robotics requires not only computing collision-free paths but also certifying infeasibility when no such path exists. Complete methods are limited to low-dimensional spaces, while sampling-based planners scale efficiently but cannot provide finite-time infeasibility certificates, leaving this problem largely unresolved in high-dimensional spaces. In this letter, we present a geometry-driven framework for certifying infeasibility through an explicit \MR{resolution-dependent} analysis of configuration space topology. Leveraging signed distance field representations, the proposed method traces separating manifolds induced by obstacle boundaries directly in configuration space, enabling both detection of infeasibility and identification of the specific geometric cause. To address computational challenges, we develop a parallel frontier-expansion algorithm that exploits GPU acceleration for efficient simplicial reconstruction in high-dimensional spaces. We validate the approach on 4-DOF and 5-DOF robot scenarios, certifying infeasibility within seconds for 4-DOF cases and under four minutes for 5-DOF
cases. We further discuss avenues for improving scalability to higher-dimensional spaces.
\end{abstract}
\begin{IEEEkeywords}
Motion Planning Infeasibility, Complete Motion Planning, Computational Geometry, Robotic Manipulation
\end{IEEEkeywords}
\IEEEpeerreviewmaketitle

\section{Introduction}
\label{sec:introduction}
\IEEEPARstart{C}{omplete} motion planning remains a fundamental challenge in robotics, requiring algorithms that both \TYPO{find collision-free} paths when they exist and certify infeasibility otherwise~\cite{CELL-LABELLING, INFEAS-TD}. Infeasibility certificates are particularly important in Task and Motion Planning (TAMP) problems \cite{LEARN-FEAS, TAMP-AND/OR}, where infeasible motions necessitate replanning at the task level. Classical approaches based on exact space decomposition, such as vertical cell decomposition \cite{VCD} and visibility graphs~\cite{PLAN-LAVALLE}, provide completeness guarantees but are largely restricted to low-dimensional spaces with simple polygonal obstacles. For more complex geometries, Cylindrical Algebraic Decomposition (CAD) \cite{CAD} offers a principled framework by representing obstacles as semi-algebraic sets; however, its doubly exponential complexity in both the number of obstacles and the dimensionality of the configuration space severely limits its practical applicability.
 
In higher-dimensional settings, sampling-based planners \cite{RRT, PRM, RRT-Connect} such as Rapidly-exploring Random Trees (RRT), Probabilistic Roadmaps (PRM) and their variants are widely used due to their computational efficiency. However, these methods are only probabilistically complete, meaning that \TYPO{guarantees} of infeasibility can only be obtained in the limit of infinite sampling. Consequently, providing reliable certificates of infeasibility in high-dimensional configuration spaces remains an open problem.

Recent work has explored learning-based approaches that approximate separating manifolds between start and goal configurations, for example using Support Vector Machines (SVMs) \cite{INFEAS-TD, INFEAS-COX}. While such methods can efficiently detect infeasibility, they operate primarily as classification mechanisms and do not capture the underlying geometric structure of the configuration space. In particular, they do not identify the specific obstacle boundaries responsible for the separation.
\begin{figure}[t]
\centering
\begin{subfigure}[t]{0.49\columnwidth}
    \centering
    \includegraphics[width=0.935\linewidth]{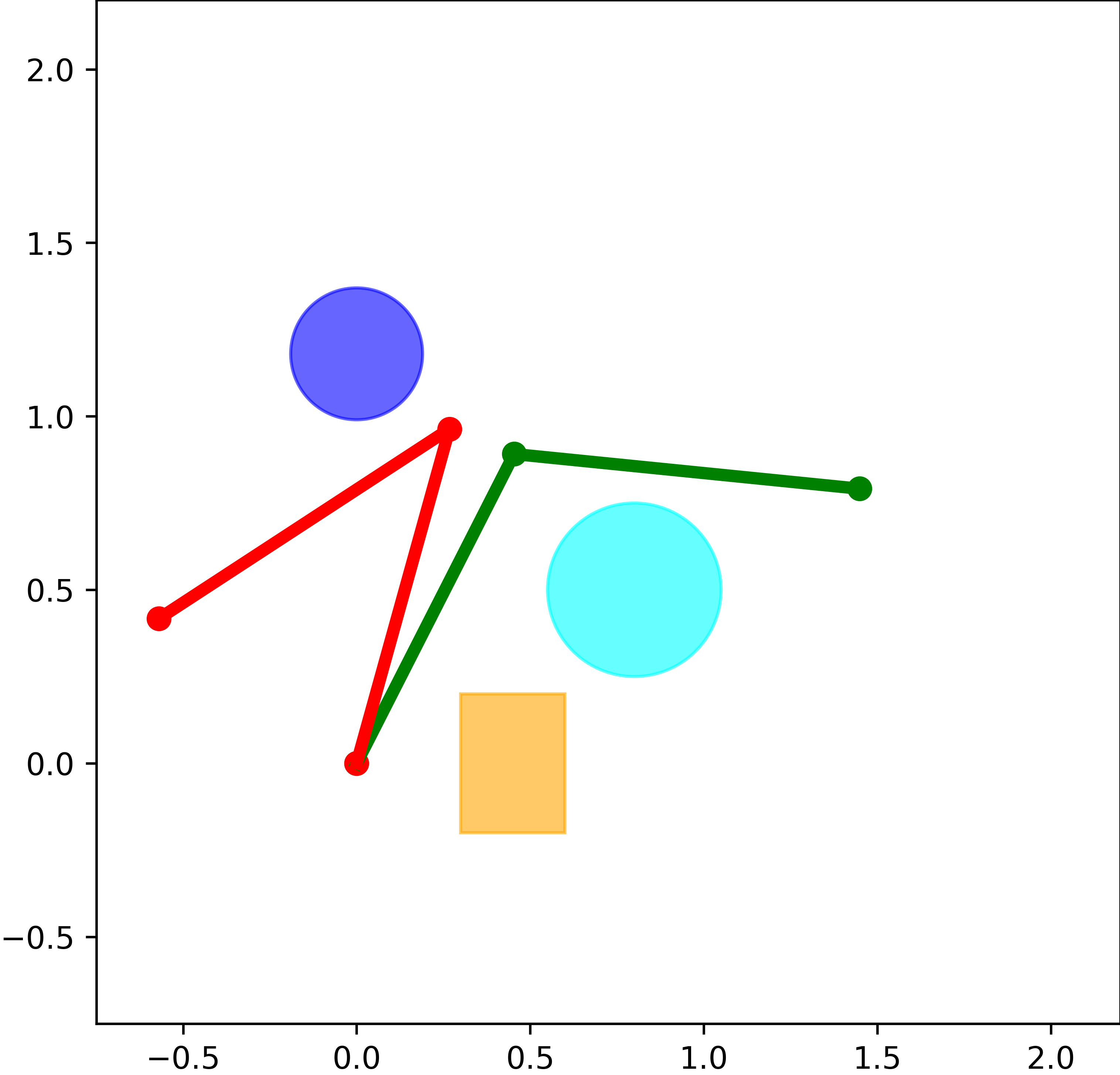}
    \caption{\TYPO{2-DOF} Robot in 2D Workspace}
    \label{fig:obstacle_boundary_2d}
\end{subfigure}
\hfill
\begin{subfigure}[t]{0.49\columnwidth}
    \centering
    \includegraphics[width=1.065\linewidth]{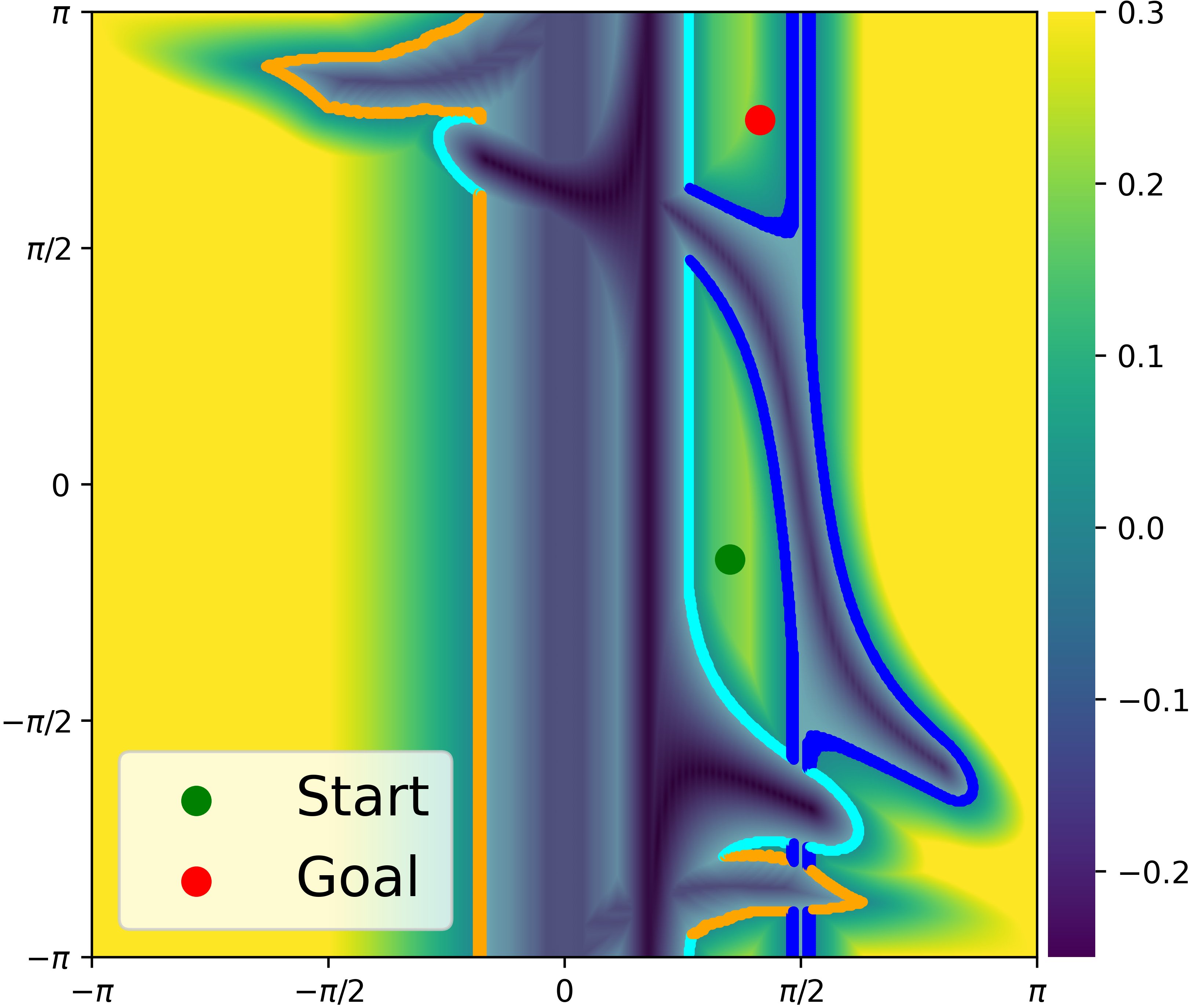}
    \caption{SDF Plot on a 2D C-Space}
    \label{fig:sdf_plot_2d}
\end{subfigure}
\caption{(a) A 2-DOF robot arm operating in a planar workspace with circular and polygonal obstacles, assuming a start configuration (green) and a goal configuration (red). (b) The Signed Distance Function (SDF) over the configuration space. The traced obstacle boundaries, corresponding to the zero level set of the SDF, are depicted in the same color as the obstacles in the workspace.} 
\label{fig:zero_sdf_2d}
\end{figure}
In this paper, we propose a novel approach for certifying infeasibility through direct geometric analysis of the configuration space. Our method traces the separating manifolds induced by obstacle boundaries using the Signed Distance Function (SDF) projected onto the configuration space, as shown in Fig.~\ref{fig:zero_sdf_2d}. This enables not only the detection of infeasibility but also the identification of the obstacles and configurations that give rise to it. This notion of \emph{actionable infeasibility} provides a structured understanding of motion planning failure by revealing the geometric causes of separation, thereby offering insights that can guide potential environment modifications to restore feasibility, as often required in Navigation Among Movable Obstacles (NAMO)~\cite{NAMO-STILMAN, THOMAS-MOD, muguiraIturralde2023ICRA} and TAMP problems.\\
Realizing this geometric perspective in higher-dimensional configuration spaces, however, requires algorithms capable of efficiently exploring and reconstructing complex manifold structures under significant computational constraints. To this end, we develop a parallelized frontier-expansion algorithm that leverages GPU acceleration to efficiently traverse the underlying geometric structure through \RR{a resolution-dependent} simplicial reconstruction. We demonstrate results on 4-DOF and 5-DOF manipulator scenarios and discuss pathways toward scaling to higher-dimensional systems.
\section{Related Work}
\label{sec:related_work}
\subsection{Space Decomposition and Cell-Based Methods}
\RR{Complete motion planning methods decompose the configuration space into 
cells whose connectivity determines path existence~\cite{NOPATH}. 
Approximate Cell Decomposition (ACD) partitions the configuration space 
into a hierarchical grid of free, occupied, and mixed cells; path 
non-existence is established when the start and goal lie in disconnected 
components of the free-cell connectivity graph. Zhang et al.~\cite{CELL-LABELLING} 
extend this framework by introducing efficient C-obstacle queries based 
on separation distance and generalized penetration depth to label cells 
as free or occupied without explicitly constructing the full obstacle 
region, providing resolution-complete path non-existence guarantees for 
robots up to 3-DOF. While effective, both approaches scale exponentially 
with configuration space dimension, limiting their applicability to 
higher-dimensional systems. Variants combining ACD with sampling-based 
planners improve handling of narrow passages while retaining resolution 
completeness~\cite{HYD-COMPLETE}, though the exponential scaling with 
dimension remains a fundamental limitation.

Deterministic sampling approaches such as star-shaped roadmaps~\cite{STAR-ROADMAP} decompose the configuration space into star-shaped regions connected through shared boundaries, providing completeness guarantees via roadmap connectivity analysis. While offering deterministic guarantees, these methods rely on explicit decomposition and become computationally intractable beyond three dimensions. Thomas et al.~\cite{THOMAS-INFEAS} take an incremental approach, discretizing the configuration space into a bitmap and sampling collision configurations until the start and goal are in distinct connected components.}
\subsection{\RR{Connectivity-Based Algorithms}}
\vspace{-1.5pt}
\RR{Another line of work focuses on the deterministic verification 
of path non-existence through caging and connectivity analysis 
of collision space \cite{CAGING}. Caging-based methods analyze 
whether an object is trapped by obstacles by studying the 
connectivity of the collision space or its lower-dimensional 
slices, providing formal guarantees of path non-existence when 
the free space is disconnected, but are typically restricted 
to specific object geometries and do not generalize to 
articulated robots in high-dimensional spaces.

Related work has examined disconnection through narrow passages 
or geometric gates \cite{DISCON-PROOFS}, wherein infeasibility 
is established by demonstrating that an object cannot traverse 
a constrained region under any admissible orientation. While 
providing geometric certificates, these methods are specialized 
to particular geometric configurations and do not scale to 
higher dimensions. McCarthy et al.~\cite{ALPHA} sample 
collision configurations weighted by generalized penetration 
depth to construct Euclidean balls guaranteed to lie within 
$C_{\mathrm{obs}}$, assembled into a weighted $\alpha$-shape 
via the Regular Triangulation. Since every interior simplex 
is fully contained within $C_{\mathrm{obs}}$ by construction, 
separation of start and goal by the exterior simplices 
constitutes a formal certificate of path non-existence, with 
asymptotic completeness as sampling density increases. 
Compared to the proposed method, \cite{ALPHA} provides 
stronger formal guarantees independent of any resolution 
parameter, but operates on the interior of $C_{\mathrm{obs}}$ 
rather than its boundary and does not attribute infeasibility 
to specific workspace obstacles.}
\subsection{Learning-Based Infeasibility Detection}
More recent approaches formulate infeasibility detection as a classification problem by learning separating surfaces between start and goal configurations. The learning phase is followed by triangulation to validate the resulting separating manifold \cite{INFEAS-SVM, INFEAS-TD, INFEAS-COX}. While these methods enable efficient detection of infeasibility in configuration spaces of up to five dimensions, the learned separating surfaces do not provide direct geometric insight into which obstacles induce the separation. Similarly,~\cite{VISUAL-FEAS} predicts feasibility of a discrete action from visual inputs; however, it likewise does not yield interpretable geometric certificates of infeasibility.

\textit{In contrast to prior work, the proposed method computes separation directly from obstacle boundaries in configuration space and identifies the specific obstacles responsible for it. This yields both a \RR{resolution-dependent infeasibility certification} and a geometric explanation of its cause, which we refer to as actionable infeasibility.} 

\section{Preliminaries} 
\label{sec:preliminaries}
\subsection{Obstacle Representation}
\begin{figure*}[t]
    \centering
    \begin{subfigure}{0.325\textwidth}
        \centering
        \includegraphics[width=\linewidth]{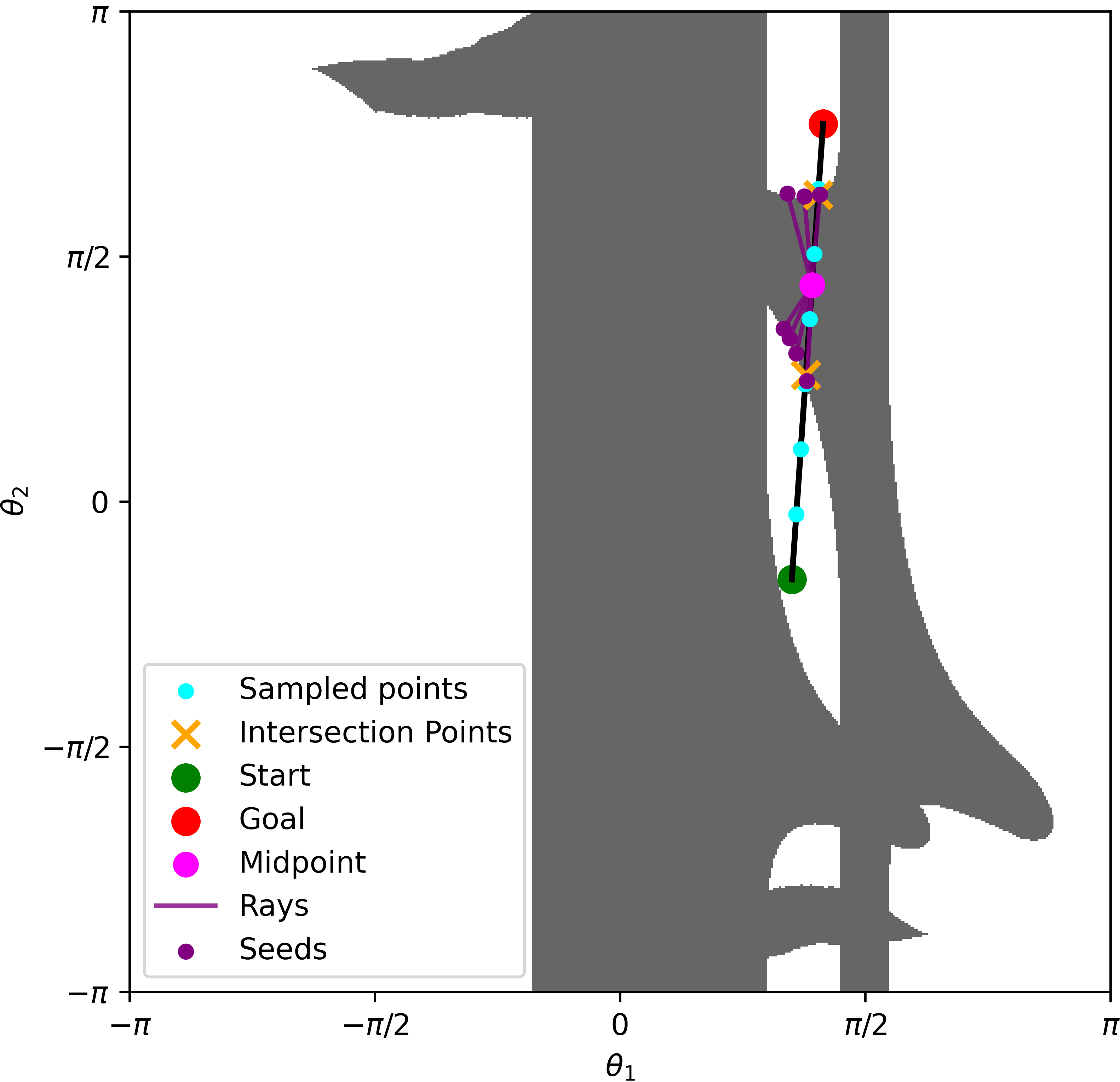}
        \vspace{-0.6cm}
        \caption{Seed sampling on the obstacle boundaries}
        \label{fig:fk_triangulation:seed}
    \end{subfigure}
    \hfill
    \begin{subfigure}{0.325\textwidth}
        \centering
        \includegraphics[width=\linewidth]{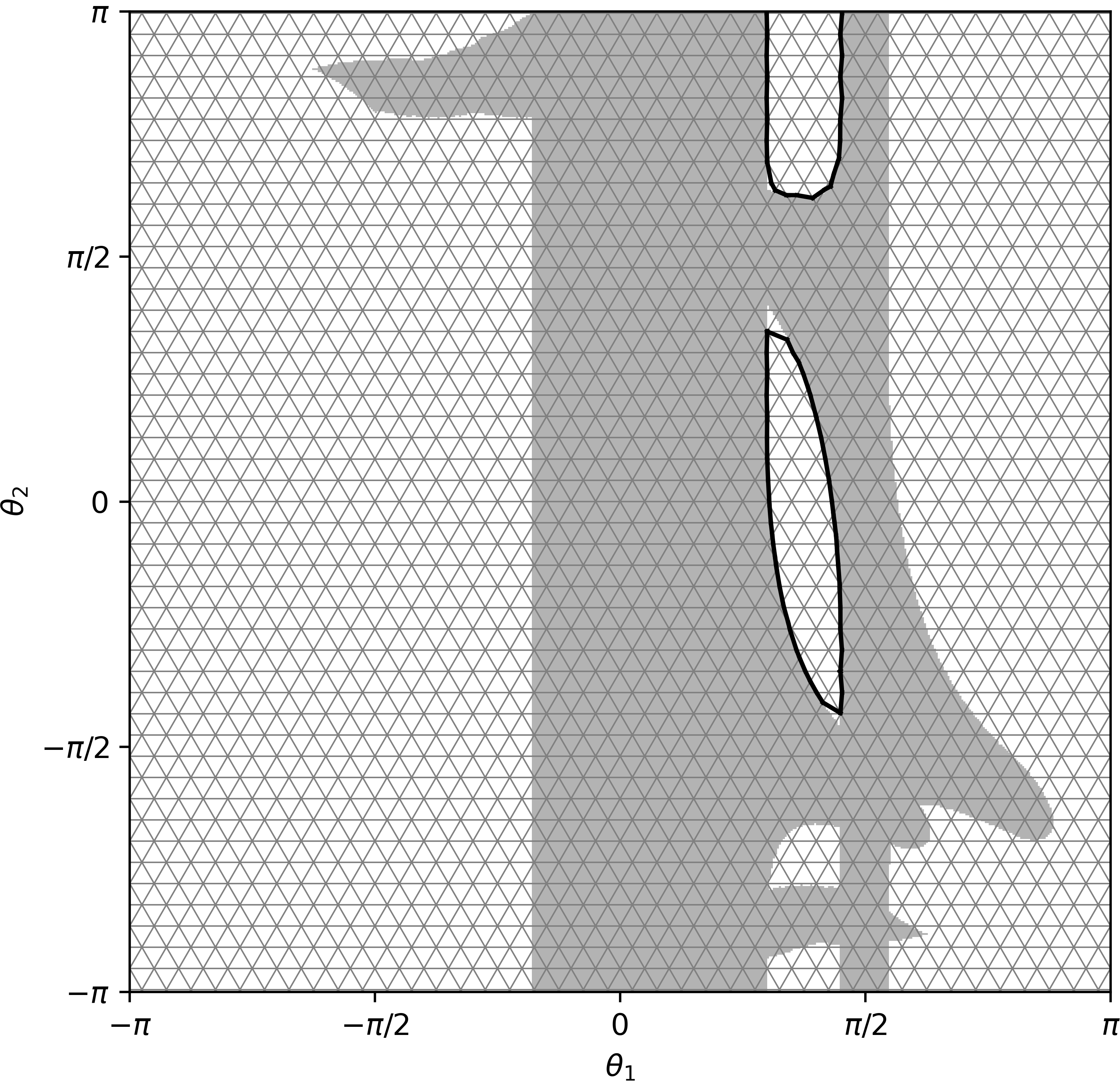}
          \vspace{-0.6cm}
        \caption{Coxeter-Triangulation for tracing}
        \label{fig:fk_triangulation:triangle}
    \end{subfigure}
    \hfill
    \begin{subfigure}{0.325\textwidth}
        \centering
        \includegraphics[width=\linewidth]{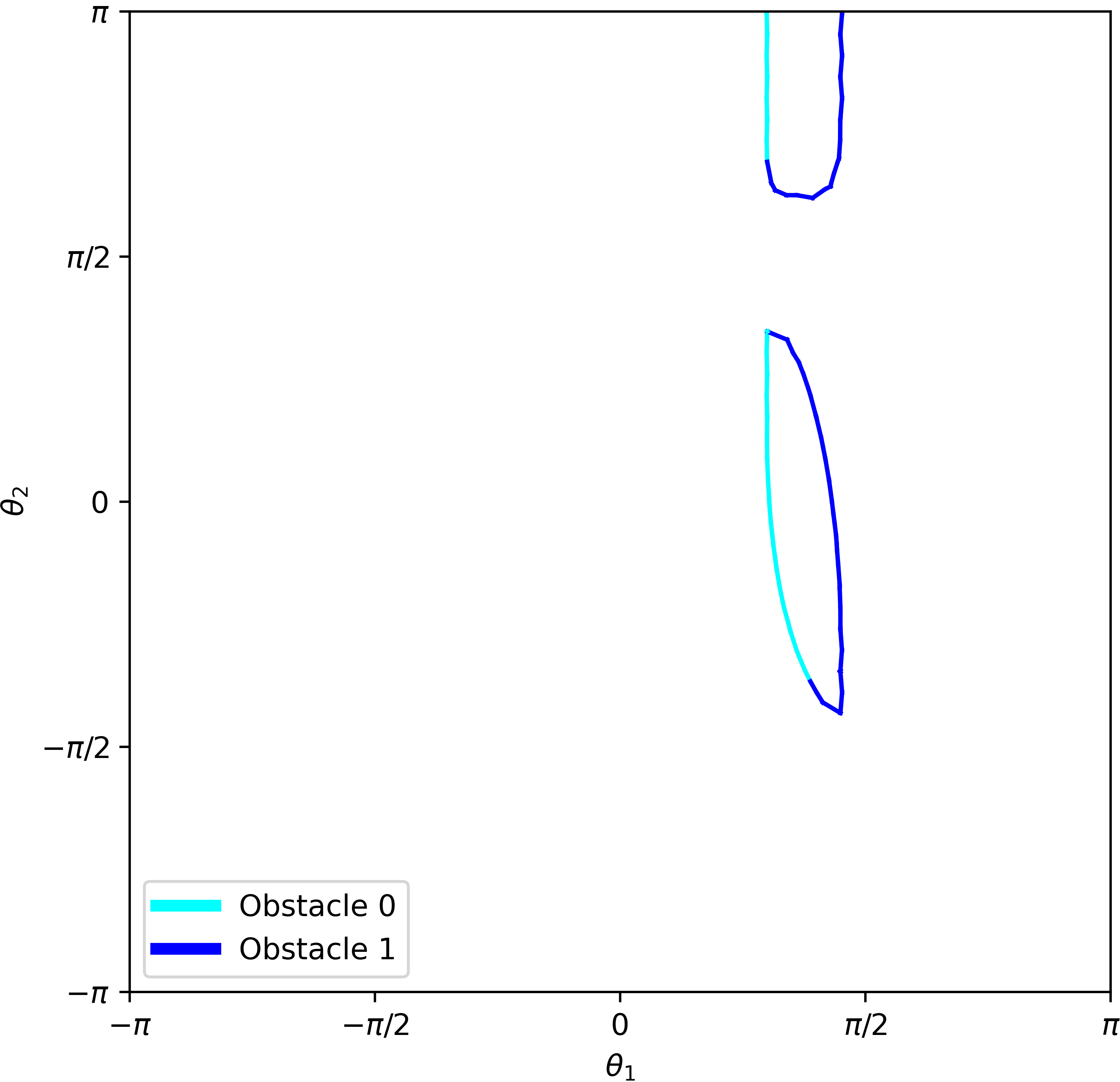}
          \vspace{-0.6cm}
        \caption{Obstacle contact manifolds}
        \label{fig:fk_triangulation:boundary}
    \end{subfigure}
      \vspace{-0.1cm}
    \caption{Overview of the proposed algorithm in a 2D configuration space corresponding to the 2-DOF robot in Fig.~\ref{fig:sdf_plot_2d}.
    (a) Seed points are generated on the configuration space obstacle boundary using the proposed strategy (Section~\ref{subsec:seed_sampling}) to avoid tracing irrelevant regions.
    (b) The obstacle boundary is traced via the \TYPO{Coxeter-Freudenthal-Kuhn} triangulation of the ambient space.
    (c) The traced boundary is decomposed into obstacle-specific components, identifying the obstacles responsible for the separation between start and goal configurations.}
    \vspace{-0.5cm}
    \label{fig:fk_triangulation}
\end{figure*}

\TYPO{The} \SHORT{obstacle region in configuration space can be represented 
using geometric constructions such as Minkowski sums \ \cite{C-SPACE-PLAN}, 
grid-based approximations \cite{OCCUPANCY-THRUN, EDT-MAP, VOXEL-MAP}, or 
implicit formulations whose zero level-sets define obstacle boundaries 
\cite{CDF, iSDF, PLAN-LAVALLE}. The latter are particularly suited to 
high-dimensional spaces where explicit construction is computationally 
prohibitive. In this work, we adopt a signed distance function representation that enables manifold tracing along obstacle contact surfaces.} \RR{ Importantly, the SDF used here differs from standard formulations as it does not satisfy the Eikonal equation. Since only the zero level-set is needed for boundary extraction, we use the projected SDF directly, which yields an identical boundary to the true configuration space distance function \cite{CDF}.}

Let each obstacle in the workspace be denoted by $X_{\R{obs}}^i \RR{\subset \mathbb{R}^m}$ for $i \in I$.
We define the signed distance to obstacle $i$ as
\[
\phi_i(\B{x}) =
\begin{cases}
d(\B{x}, X_{\R{obs}}^i), & \B{x} \notin X_{\R{obs}}^i, \\
-\, d\!\left(\B{x}, (X_{\R{obs}}^i)^c\right), & \B{x} \in X_{\R{obs}}^i,
\end{cases}
\]
where $d(\B{x},S) = \inf_{\B{y} \in S} \|\B{x} - \B{y}\|$ denotes the distance from a point to a set and \RR{$m \in \{ 2, 3\}$ is the workspace dimension}. The signed distance between the robot at any configuration $q$ and the workspace obstacles is then defined as
\begin{equation}
\R{sdf}(\B{q}) =  \min_{i \in I} \; \min_{\B{a} \in A(\B{q})} \phi_i(\B{a}) \label{eq:signed_distance}
\end{equation}
where \RR{$A(\B{q}) \subset \mathbb{R}^m$} denotes the set of points occupied by the robot at configuration $\B{q}$. The configuration-space obstacle region is defined as
\begin{equation}
C_{\mathrm{obs}} = \{ \mathbf{q} \in \mathcal{C} \mid A(\mathbf{q}) \cap X_{\mathrm{obs}} \neq \emptyset \},
\end{equation}
where $X_{\mathrm{obs}} = \bigcup_{i \in I} X_{\mathrm{obs}}^i$ denotes the union of all workspace obstacles. Equivalently, using the signed distance function in~\eqref{eq:signed_distance}, the obstacle region (\RR{including the contact region}) can be characterized as
\begin{equation}
C_{\mathrm{obs}} = \{ \mathbf{q} \in \mathcal{C} \mid \mathrm{sdf}(\mathbf{q}) \leq 0 \},
\end{equation}
with the obstacle region boundary given by the zero level set
\begin{equation}
\partial C_{\mathrm{obs}} = \{ \mathbf{q} \in \mathcal{C} \mid \mathrm{sdf}(\mathbf{q}) = 0 \}.
\end{equation}
\subsection{Actionable Infeasibility}
Given start and goal configurations $\B{s}, \B{g} \in C_{\text{free}}$, a path between them is infeasible if $\B{s} \in C_s$ and $\B{g} \in C_g$, where $C_s, C_g \subset C_{\text{free}}$ are distinct path-connected components such that $C_s \cap C_g = \emptyset $. In this case, the \RR{codimension-one (with respect to $\dim(\C{C})$)} boundary-less manifolds $\partial C_s$ and $\partial C_g$ (i.e., the boundaries of the connected components containing $\B{s}$ and $\B{g}$, respectively) \RR{are subsets of} the boundary of the obstacle region $\partial C_{\text{obs}}$, and act as separating \RR{manifolds} between the start and goal configurations. Since obstacle boundaries in configuration space may be induced by multiple workspace obstacles, $\partial C_{\text{obs}}$ can be decomposed into obstacle-induced contact manifolds.
\RR{\begin{lemma}(Infeasibility certificate by separation). Let $\C{C}$ be a topological space, let $C_{\R{obs}} \subseteq \C{C}$ be closed, and define
\begin{equation}
    C_{\R{free}} = \C{C} \setminus C_{\R{obs}}
\end{equation}
Let $s, g \in C_{\R{free}}$, and let
\begin{equation}
    S \subseteq C_{\R{obs}}
\end{equation}
be a closed set. Suppose that $s$ and $g$ lie in distinct path-connected components of $\C{C} \setminus S$. Then $s$ and $g$ lie in distinct path-connected components of $C_{\R{free}}$. Equivalently, there is no collision-free path from s to g.
\end{lemma}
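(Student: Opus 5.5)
The argument is by contraposition and is essentially a containment argument. I will assume that $s$ and $g$ lie in the same path-connected component of $C_{\R{free}}$ and derive that they lie in the same path-connected component of $\C{C} \setminus S$, which contradicts the hypothesis. The only structural fact needed is the inclusion
\begin{equation*}
C_{\R{free}} = \C{C} \setminus C_{\R{obs}} \subseteq \C{C} \setminus S,
\end{equation*}
which follows at once from $S \subseteq C_{\R{obs}}$ by taking complements.

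First I fix what ``same path-connected component of $C_{\R{free}}$'' means. It means there is a continuous map $\gamma : [0,1] \to C_{\R{free}}$ with $\gamma(0)=s$ and $\gamma(1)=g$, where $C_{\R{free}}$ carries the subspace topology from $\C{C}$.

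Second, I compose $\gamma$ with the inclusion maps $C_{\R{free}} \hookrightarrow \C{C}\setminus S \hookrightarrow \C{C}$. Each inclusion is continuous for the subspace topologies, because the subspace topology on $C_{\R{free}}$ induced from $\C{C}\setminus S$ coincides with the one induced from $\C{C}$. So $\gamma$ is also a continuous path in $\C{C} \setminus S$ from $s$ to $g$. Hence $s$ and $g$ lie in the same path component of $\C{C}\setminus S$, which is the desired contradiction.

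The final sentence of the lemma (``no collision-free path'') is a restatement. A collision-free path is exactly a continuous path in $C_{\R{free}}$, so its nonexistence is equivalent to $s$ and $g$ lying in distinct path components of $C_{\R{free}}$.

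The closedness hypotheses on $C_{\R{obs}}$ and $S$ are not used in the argument. They only guarantee that $C_{\R{free}}$ and $\C{C}\setminus S$ are open, which matches the intended setting. The only step that needs any care is confirming that the subspace topologies agree under nested inclusion, which is a standard fact. In the paper's application, $S$ would be the traced portion of $\partial C_{\R{obs}}$; its inclusion $S \subseteq C_{\R{obs}}$ follows because $\R{sdf}=0$ there and $C_{\R{obs}}$ includes contact configurations.
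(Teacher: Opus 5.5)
Your proposal is correct and is essentially the paper's proof: both use only the inclusion $C_{\mathrm{free}} = \mathcal{C}\setminus C_{\mathrm{obs}} \subseteq \mathcal{C}\setminus S$ to view a hypothetical collision-free path from $s$ to $g$ as a path in $\mathcal{C}\setminus S$, which contradicts the hypothesis. Your extra observations, that the subspace topologies agree and that the closedness assumptions are never used, are accurate but not needed for the argument.
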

\begin{proof}
Assume, for contradiction, that there exists a continuous path
\begin{equation}
\gamma : [0, 1] \rightarrow C_{\R{free}}
\end{equation}
with $\gamma(0) = s$ and $\gamma(1) = g$. Since $S \subseteq C_{\R{obs}}$, we have $C_{\R{free}} = C \setminus C_{\R{obs}} \subseteq C \setminus S$. Therefore $\gamma$ is also a path in $C \setminus S$ from $s$ to $g$, contradicting the assumption that s and g lie in distinct path-connected components of $C \setminus S$. Hence no such path exists in $C_{\R{free}}$.
\end{proof}}
\begin{definition}
Let the configuration space obstacle boundary be denoted by
\vspace{-0.3cm}
\[
\partial C_{\text{obs}} = \bigcup_{i=1}^{N} \C{M}_i,
\]
where $\C{M}_i$ is the contact manifold induced by obstacle $i$.
A separating pair of obstacle boundaries $\C{S} = \{\partial C_s, \partial C_g\}$ between $\B{s}$ and $\B{g}$ is such that every continuous path from $\B{s}$ to $\B{g}$ intersects $\C{S}$.
An \textit{actionable infeasibility proof} is the identification of a set of obstacles
$I \subseteq \{1,\dots,N\}$ such that
\[
\partial C_s \cup \partial C_g = \bigcup_{i \in I} \C{M}'_i,
\quad \text{where } \C{M}'_i \subseteq \C{M}_i,
\]
and $\C{S}$ separates $C_s$ and $C_g$. The obstacle set $I$ is said to be \RR{the obstacles on the nearest contact manifold in the separaion set} and provides an actionable explanation \RR{in the sense that iteratively evaluating the actionable infeasibility set and removing the obstacles in the set would lead to a feasible solution (optimality is not guaranteed)}. As illustrated in Fig.~\ref{fig:fk_triangulation:boundary}, this set contains the two circular obstacles in Fig.~\ref{fig:obstacle_boundary_2d}. \RR{Removing} either obstacle gives a feasible path from $\B{s}$ to $\B{g}$.
\end{definition}
\vspace{-0.6cm}
\subsection{\RR{Resolution-dependent} Manifold Tracing}
A simplicial complex provides a piecewise-linear approximation of a manifold by decomposing it into simplices. Given a $(d-1)$-dimensional manifold represented implicitly as the zero level-set of a function embedded in a $d$-dimensional ambient space, a simplicial approximation can be constructed using classical triangulation methods such as Marching Cubes and related grid-based techniques \cite{MARCHING-CUBES, RMT}. However, these methods rely on global grid discretizations, which become computationally prohibitive in higher-dimensional spaces due to the exponential growth in the number of grid cells, even when parallelized.

To address this computational bottleneck, we adopt a numerical continuation-based approach built on the Coxeter-Freudenthal–Kuhn triangulation (CFK) (see Fig. \ref{fig:fk_triangulation:triangle}). Instead of constructing a global grid, the method incrementally explores the manifold by traversing adjacent simplices that intersect the zero level-set in the ambient space. This strategy prevents redundant grid computations and enables efficient tracing of the manifold by focusing computation only in regions that intersect the implicit \RR{hypersurface} using a frontier expansion style traversal \cite{MESH-COX} which makes it parallelizable. The CFK triangulation, together with the Permutahedral representation of simplices, enables efficient combinatorial queries for enumerating lower-dimensional faces and higher-dimensional cofaces, allowing traversal across adjacent simplices intersecting the manifold. The resulting piecewise-linear approximation is controlled by a scale parameter $\lambda$, which determines the triangulation resolution.
\vspace{-0.45cm}
\subsection{Assumptions}
We make the following assumptions in this approach. First, we assume a Euclidean configuration space, as manifold tracing via triangulation requires a Euclidean space. This is a reasonable assumption for physical manipulators with joint limits. Second, we focus on kinematic infeasibility, i.e., infeasibility arising from $C_{\mathrm{obs}}$, which is a reasonable assumption in many practical robotic manipulation settings where obstacle constraints dominate. 


%
\vspace{-2mm}
\section{Algorithm}
\label{sec:approach}

In this section, we present our approach for constructing actionable 
infeasibility proofs in configuration space via a parallel, GPU-accelerated pipeline. Given the robot geometry, kinematic model, and static workspace obstacles, we compute the SDF in~\eqref{eq:signed_distance} to obtain an implicit representation of $\partial \mathcal{C}_{\mathrm{obs}}$ as the zero level-set $\R{sdf}(\B{q}) = 0$, where $\B{q} \in C$. Seed points sampled on this level-set serve as initialization for manifold tracing, which explicitly 
recovers the obstacle-induced separating manifolds in configuration space. 
\RR{The resolution of this tracing is governed by the scaling parameter 
$\lambda$: smaller values yield finer approximations but increase computational 
cost, while larger values improve speed at the risk of missing narrow passages. 
Consequently, the method is \MR{resolution-dependent }--- infeasibility is guaranteed 
to be detected only when the separating manifold is sufficiently wide relative 
to $\lambda$. We argue this is a practical necessity in high-dimensional spaces 
where exact manifold tracing is computationally intractable, and $\lambda$ can 
be tuned to balance fidelity and runtime for a given application (see Section~\ref{sec:res-comp}).}

Given a query pair $\B{s}$ and $\B{g}$, \TYPO{we determine connectivity} using a ray-based parity test with respect to the traced manifold representation. This test classifies whether $\B{s}$ and $\B{g}$ lie in distinct path-connected components of $C_{\text{free}}$. If disconnected, the method returns a \RR{resolution-dependent} infeasibility certificate together with a subset of obstacles whose induced contact manifolds form a separating boundary between the two configurations. These boundaries provide an actionable explanation of infeasibility in terms of constraint-inducing obstacle interactions. Fig. \ref{fig:fk_triangulation} summarizes the above procedure.

\subsection{Seed Sampling}
\label{subsec:seed_sampling}
The following observation holds: To determine infeasibility in configuration space, it is not necessary to reconstruct the entire obstacle boundary; instead it suffices to identify the subset of obstacle boundaries that separates $\B{s}$ and $\B{g}$. Interpreting infeasibility as the dual of motion planning, we note that if no feasible path exists between $\B{s}$ and $\B{g}$, then every continuous path connecting them must intersect the configuration space obstacle boundary. In particular, the \TYPO{straight-line segment} between $\B{s}$ and $\B{g}$ must also intersect this boundary. Consequently, the obstacle boundary components intersected by this segment must contain at least one pair of separating manifolds that certify infeasibility.  

Based on this observation, we first compute the intersection points between the straight-line path and the configuration space obstacle boundary. Algorithm~\ref{alg:seed_generation} describes our seed sampling strategy. We uniformly sample points along the line segment connecting $\B{s}$ and $\B{g}$ such that \RR{each of the sample is within some $\epsilon < \lambda$ distance of the other nearest sample to reduce the chances of missing sign changes of the SDF over the line-segment. Note that if the obstacle region across the line is thinner that $\epsilon$, Algorithm~\ref{alg:seed_generation} might miss the intersection points.} Each detected interval is then refined using the regula falsi method to obtain accurate intersection points (line~\ref{alg:seed_generation:line1}). 

To enable efficient parallel manifold tracing, we further generate additional seed points on the boundary by casting rays from the midpoints of pairs of intersection points in randomly chosen directions (lines~\ref{alg:seed_generation:line2}-\ref{alg:seed_generation:line5}). The resulting ray–boundary intersection points are computed using a Newton-based root-finding scheme with bisection-based refinement to ensure robustness (line~\ref{alg:seed_generation:line6}). These boundary points are added to the seed set $S$ for parallel tracing of the separating manifolds.


\begin{algorithm}[t]
\caption{Seed Sampling}
\label{alg:seed_generation}
\begin{algorithmic}[1]
\Require{Start $\B{s}$, goal $\B{g}$, number of rays $N_r$}
\Ensure{Seed set $S \ | \ \forall \B{q} \in S$, $\R{sdf}(\B{q})=0$}
\State{Compute line-boundary intersections $S$} \label{alg:seed_generation:line1}
\State{Compute midpoints $\B{m_k}$ between intersection pairs\label{alg:seed_generation:line2}}
\For{each midpoint $\B{m_k}$}
    \For{$r = 1$ to $N_r$ \textbf{in parallel}}
        \State{Cast ray from $\B{m_k}$ in a sampled direction $\B{d}$\label{alg:seed_generation:line5}}
        \State{Find $\R{sdf}(\B{q})=0$ \texttt{//} \textcolor{mygray}{$\B{q}$ computed using Newton-based root-finding along with Bisection method} \label{alg:seed_generation:line6} }
               \State{Add intersection to $S$}
    \EndFor
\EndFor
\State{\Return{ $S$}}
\end{algorithmic}
\end{algorithm}
\vspace{-5pt}
\subsection{Connected Components Tracing}
We build upon the simplicial-complex reconstruction framework of \cite{MESH-COX} to represent and traverse obstacle boundaries in the configuration space. Algorithm~\ref{alg:frontier_ini} presents the pseudo-code for initializing the frontier arrays with the initial simplices in the Permutahedral Representation, using the set $S$ obtained from Algorithm~\ref{alg:seed_generation}. We then use Algorithm~\ref{alg:trace} to extract all the 1-simplices (1D edges) intersecting the obstacle boundary. 

The sampled seed points are first used to identify the maximal simplices that contain them via \verb|locate_simplex| which returns simplices in Permutahedral representation (Algorithm~\ref{alg:frontier_ini}, line~\ref{alg:frontier_ini:locate_simplex}). The 1D edges of these simplices are then inserted into an initial frontier set (Algorithm~\ref{alg:frontier_ini}, line~\ref{alg:frontier_ini:frontier_add}) and recorded in a global hash table together with their corresponding intersection points (Algorithm~\ref{alg:frontier_ini}, line 7). This initialization yields the frontier array $\mathcal{F}$, which serves as the starting point for manifold tracing (Algorithm~\ref{alg:trace}).

We perform a frontier-based expansion to trace the obstacle boundary. For each edge $e \in \mathcal{F}$ (Algorithm~\ref{alg:trace}, line~\ref{alg:trace:frontier_expansion}), we enumerate its adjacent edges $e'$ via the 2D cofaces incident $e$ (Algorithm~\ref{alg:trace}, lines~\ref{alg:trace:cofaces}–\ref{alg:trace:edges_of_cofaces}). Each edge is incident to $d!$ such cofaces, where $d$ is the dimension of the configuration space, yielding candidate edges (excluding the shared edge) that are tested for intersection with the boundary. Intersecting edges are inserted into the next frontier $\mathcal{N}$ and recorded in the hash table $\mathcal{H}$ along with their intersection points (Algorithm~\ref{alg:trace}, lines~\ref{alg:trace:edge_check}–\ref{alg:trace:end_of_edge_check}). This process is repeated iteratively, enabling local, incremental reconstruction of the boundary manifold.
\begin{algorithm}[t]
\caption{Initialize Frontier}
\label{alg:frontier_ini}
\begin{algorithmic}[1]
\Require{Seed set $S$, signed distance function $\mathrm{sdf}(\B{q})$}
\Ensure{Initialized Frontier array}
\State{Initialize frontier arrays $\C{F}, \C{N} \leftarrow \emptyset$ \label{alg:frontier_ini:ini}}
\ForAll{seed points $\B{s_i} \in S$ in parallel}
    \State{\verb|locate_simplex|($\B{s_i}$) \label{alg:frontier_ini:locate_simplex}}
    \ForAll{edges $e$ of $\sigma$}
        \If{$e$ intersects $\R{sdf}(\B{q})=0$} \label{alg:frontier_ini:edge_check}
                        \State{$\B{q}_{int} \leftarrow$ \verb|intersection_point|$(\R{sdf}, e)$}
                        \State{$c \leftarrow \C{H}$.\verb|cond_hash_insert|$(e, \B{q}_{int}, i)$  \label{alg:frontier_ini:cond_hash_insert}}
            \If{$c == \R{flag1}$} \label{alg:frontier_ini:flag1}
                \State{Add $(e, i)$ to frontier $\C{F}$ \texttt{//} \textcolor{mygray}{$e$ not visited}\label{alg:frontier_ini:frontier_add}}
            \ElsIf{$c == \R{flag2}$} \label{alg:frontier_ini:flag2}
                \State{continue \texttt{//} \textcolor{mygray}{$e$ already visited}}
            \Else{\label{alg:frontier_ini:flag3}}
                \State{\verb|merge_components|$(c,i)$}
            \EndIf
        \EndIf \label{alg:frontier_ini:end_of_edge_check}
    \EndFor
\EndFor
\State{\Return{$\C{F}, \C{N}$}}
\end{algorithmic}
\end{algorithm}

Both the initialization and expansion stages rely on a conditional hash insertion procedure (\verb|cond_hash_insert|) to manage edge visitation. We employ linear probing with an FNV1a hash function \cite{FNV} for efficient storage and lookup. If an edge has not been previously visited, it is inserted into the hash table and \textit{flag1} is returned, triggering its inclusion in the current or next frontier (Algorithm~\ref{alg:frontier_ini}, line~\ref{alg:frontier_ini:flag1} and Algorithm~\ref{alg:trace}, line~\ref{alg:trace:flag1}). If the edge has already been visited from the same seed component (\textit{flag2}), the insertion is skipped (Algorithm~\ref{alg:frontier_ini}, line~\ref{alg:frontier_ini:flag2} and Algorithm~\ref{alg:trace}, line~\ref{alg:trace:flag2}). When the edge has been visited from a different component, the corresponding component identifiers are merged using \verb|merge_components|, ensuring consistent labeling across connected regions (Algorithm~\ref{alg:frontier_ini}, line~\ref{alg:frontier_ini:flag3} and Algorithm~\ref{alg:trace}, line~\ref{alg:trace:flag3}).

We note that, since frontier expansion is parallelized across edges, hash insertions and component merges are performed atomically to avoid race conditions. This design enables efficient exploration of the manifold boundary while preserving correctness under parallel execution.
\subsection{Reduced Visited Set}
To reduce memory consumption and mitigate the cost of membership checks as the hash table becomes saturated, we employ a reduced visited-set representation that retains only the current and previous frontiers, thereby preventing backtracking \cite{DCBDS, DCFS}. This design significantly lowers the GPU memory footprint and allows visited edges to be offloaded to CPU memory after each frontier expansion during manifold tracing. In addition to improving space efficiency, it also reduces insertion overhead by limiting repeated probing, leading to improved performance in high-dimensional settings.
\begin{algorithm}[t]
\caption{Manifold Tracing}
\label{alg:trace}
\begin{algorithmic}[1]
\Require{Seed points $S$, signed distance function $\mathrm{sdf}(\B{q})$}
\Ensure{Traced Edges of the Ambient Space}
\State{$\C{F}, \C{N} \leftarrow $\verb|initialize_frontiers|($S, \R{sdf}$) \label{alg:trace:line1}}
\State{Initialize Hash Table $\C{H} \leftarrow \emptyset$ \label{alg:trace:line2}}
\While{$\C{F}$ is not empty}
    \ForAll{$e \in \C{F}$ \textbf{in parallel}} \label{alg:trace:frontier_expansion}
        \ForAll{$c \in e.\R{cofaces}(2)$} \label{alg:trace:cofaces}
            \ForAll{edges $e'$ of c} \label{alg:trace:edges_of_cofaces}
                    \If{$e'$ intersects $\R{sdf}(\B{q})=0$} \label{alg:trace:edge_check}
                        \State{$\B{q}_{int}$$\leftarrow$\verb|intersection_point|$(\R{sdf}, e')$}
                        \State{$c$$\leftarrow$$\C{H}$.\verb|cond_hash_insert|$(e', \B{q}_{int}, i)$}
                        \If{$c == \R{flag1}$} \label{alg:trace:flag1}
                            \State{Add $(e', i)$ to next frontier $\C{N}$ \label{alg:trace:frontier_add}}
                        \ElsIf{$c == \R{flag2}$} \label{alg:trace:flag2}
                            \State{continue}
                        \Else \label{alg:trace:flag3}
                            \State{\verb|merge_components|$(c,i)$}
                        \EndIf
                    \EndIf \label{alg:trace:end_of_edge_check}
            \EndFor
        \EndFor
    \EndFor 
    \State{$\C{F} \leftarrow \C{N}$}
    \State{$\C{N} \leftarrow \emptyset$}
\EndWhile
\State{\Return{Visited Edge Set $\C{H}$}}
\end{algorithmic}
\end{algorithm}
\vspace{-0.4cm}
\subsection{Connected Component Query}

Once the set of boundary-intersecting edges is obtained, we determine whether 
$\B{s}$ \TYPO{and} $\B{g}$ lie in separate path-connected components via a 
ray-casting strategy. A ray parallel to the first configuration axis is cast 
from the query point, and its intersections with the obstacle boundary are 
retrieved from the stored edge intersections using the same Newton--bisection 
procedure as Algorithm~\ref{alg:seed_generation}. The component membership of 
the query point is then determined by the parity of the total intersection 
count, using the component identifier of the nearest intersecting edge. This 
avoids explicit triangulation of the ambient-space manifold, reducing 
computational overhead.

\SHORT{\noindent \textbf{Actionable Infeasibility:} Each SDF query returns the 
identifier of the obstacle closest to the robot, which is recorded at every 
edge--obstacle intersection point. Aggregating these identifiers across all 
intersection points yields the subset of obstacles responsible for the 
infeasibility.}
\section{Experiments}
\label{sec:experiments}
We evaluate the proposed algorithm on simulated 4-DOF and 5-DOF robotic systems across infeasible planning instances\footnote{\RR{Since it is difficult to obtain exact ground truth for plan infeasibility in high dimensions, we assume infeasibility with strong confidence determined by running PRM \cite{PRM} using the OMPL Library \cite{OMPL} with Gaussian sampling strategy \cite{GAUSS_SAMP} to address narrow passages for 30 minutes similar to \cite{INFEAS-TD}.}}. The considered scenarios are inspired by those presented in~\cite{INFEAS-COX}. The algorithm is implemented primarily in CUDA C++ and to exploit parallelism, our implementation leverages GPU acceleration via a custom geometry engine, in which signed distance queries are executed as device-level functions. Robot and obstacle geometries are modeled using primitive shapes (boxes, spheres, and cylinders) and visualized through an OpenGL-based module.  A kinematic tree data structure is used to efficiently perform forward kinematics, enabling accurate signed distance evaluations. Additionally, mesh-based distance functions are incorporated following \cite{IMPROVED-GJK}. In all scenarios, the triangulation resolution is fixed at $\lambda = 0.1$.


All experiments are conducted on a system equipped with a 16-core Intel Ultra 9 185H CPU (16 GB RAM) and an NVIDIA RTX 4060 GPU (8 GB VRAM). Reported runtimes correspond to the averages over ten independent trials for each scenario. For seed generation Algorithm~\ref{alg:seed_generation}), we use 1000 uniformly sampled points along the straight-line segment between $\B{s}$ and $\B{g}$ to find intersection points.

\begin{table}[t]
\caption{Runtime results. \textit{Tracing}- manifold tracing including Algorithms~\ref{alg:frontier_ini} and~\ref{alg:trace}; \textit{Seed}- the seed sampling time and \RR{ \textit{SVM}- the total execution time for the infeasiblity framework proposed in \cite{INFEAS-COX}.}}
\vspace{-0.2cm}
\label{tab:planning_time}
\centering
\setlength{\tabcolsep}{2.5pt}
\begin{tabular}{lccccc}
\hline
Environments & Tracing(s) & Seed(s) & Total(s) & \RR{SVM(s)~\cite{INFEAS-COX}}\\
\hline
\hline
\noalign{\vspace{3pt}}
4DOF(link 2) & 5.12$\boldsymbol{\pm}$0.03 & 0.50 & 5.62$\boldsymbol{\pm}$0.02 & \RR{52.63$\boldsymbol{\pm}$4.90}\\
4DOF(link 3) & 3.05$\boldsymbol{\pm}$0.01 & 0.23 & 3.28$\boldsymbol{\pm}$0.01 & \RR{31.15$\boldsymbol{\pm}$8.19}\\
Shoulder-Joint & 2.33$\boldsymbol{\pm}$0.03 & 0.23 & 2.56$\boldsymbol{\pm}$0.02 & \RR{39.20$\boldsymbol{\pm}$4.59}\\
SCARA(small) & 6.44$\boldsymbol{\pm}$0.02 & 0.34 & 6.79$\boldsymbol{\pm}$0.02 & \RR{74.40$\boldsymbol{\pm}$7.24}\\
SCARA(big) & 4.99$\boldsymbol{\pm}$0.03 & 0.35 & 5.35$\boldsymbol{\pm}$0.03 & \RR{80.00$\boldsymbol{\pm}$9.51}\\
\hline
\noalign{\vspace{3pt}}
Universal & 81.49$\boldsymbol{\pm}$0.29 & 0.20 & 81.69$\boldsymbol{\pm}$0.29 & \RR{253.33$\boldsymbol{\pm}54.10$}\\
Packbot (near) & 261.04$\boldsymbol{\pm}$0.86 & 0.16 & 261.20$\boldsymbol{\pm}$0.86 & \RR{494.82$\boldsymbol{\pm}$173.16}\\
Packbot (far) & 253.71$\boldsymbol{\pm}$0.79 & 0.08 & 253.79$\boldsymbol{\pm}$0.80 & \RR{433.22$\boldsymbol{\pm}$154.13}\\
\hline
\end{tabular}
\vspace{-0.2cm}
\end{table}

\noindent\textbf{4-DOF Robot Scenarios:}
We evaluate the performance of the proposed approach across three different 4-DOF robot scenarios: (a) a 4-DOF manipulator trying to reach inside a frame (Fig.~\ref{fig:sim:4dofarm}); (b) a two-link arm with shoulder and elbow joints (Fig.~\ref{fig:sim:shoulderbot}); and (c)  a SCARA-like robot with three revolute and one prismatic joint, where a green block attached \TYPO{to} the end-effector must reach inside a box (Fig.~\ref{fig:sim:scara}). For both the 4-DOF manipulator and the shoulder–elbow system, infeasibility is induced by placing the goal  within a rectangular frame, rendering it unreachable. In the SCARA-like setup, the goal requires the green block to pass through a narrow slot to reach the interior of a box; a spherical obstacle positioned in front of the box makes this infeasible. For the 4-DOF manipulator, we vary the robot base position, while for the SCARA scenario, we enlarge the obstacle to generate additional scenes. These variations are discussed in Section~\ref{subsec:boundary_size}.

The first five rows of Table~\ref{tab:planning_time} report the results. Across all experiments, the proposed method consistently detects infeasibility within a few seconds by tracing obstacle boundaries in the configuration space. This process enables clear identification of the obstacle regions responsible for infeasibility (red obstacles in Fig.~\ref{fig:experiments}), which manifest as disjoint boundary components. The number of sampled rays is set to 100, which empirically provides the best performance across all considered 4-DOF scenarios.
 
\noindent\textbf{5-DOF Robot Scenarios:}
We conduct 5-DOF experiments using robot geometries and kinematic structures inspired by the Universal UR5 \cite{UNIVERSAL} and the PackBot 510 manipulator~\cite{PACKBOT}, as illustrated in Fig. \ref{fig:sim:universal} and Fig. \ref{fig:sim:packbot}, respectively. In the PackBot setup, seed generation is restricted to straight-line intersection points, whereas for the UR5-inspired system, we employ 10 sampled rays. The PackBot environment consists of a single table as the primary obstacle; the robot is initialized beneath the table and tasked with reaching a configuration above it, which is infeasible. In contrast, the UR5-inspired scene \TYPO{contains} multiple primitive-shaped obstacles arranged on a tabletop, requiring the end effector to navigate through them. The arrangement of obstacles (red obstacles in Fig.~\ref{fig:sim:packbot}) renders the target configuration infeasible. Results are reported in Table~\ref{tab:planning_time} (last three rows). The Universal robot scenario completes in under one and a half minutes, while the two PackBot scenes with varying table heights (see Section~\ref{subsec:boundary_size}) require approximately four and a half minutes. This difference is primarily due to the larger obstacle boundary that must be traced in the PackBot scenes.

\RR{\noindent\textbf{SVM-Infeasibililty:} We compare our execution-time with the algorithm proposed in \cite{INFEAS-COX} using the same parameter values for SVM as suggested in the paper and $\lambda = 0.1$\footnote{\RR{Our best-effort implementation of \cite{INFEAS-COX} is available at \href{https://anonymous.4open.science/r/SVM_Infeasibility-85DB}{\textit{anonymous-repo}}}.}. We can see that our infeasibility detection is $\sim$10x faster for 4DOF cases and $\sim$2x faster for 5DOF cases while also managing to capture the geometric structure of contact manifolds in $\C{C}$-space.  }


\begin{figure}[t]
    \centering
    \includegraphics[width=1.0\linewidth]{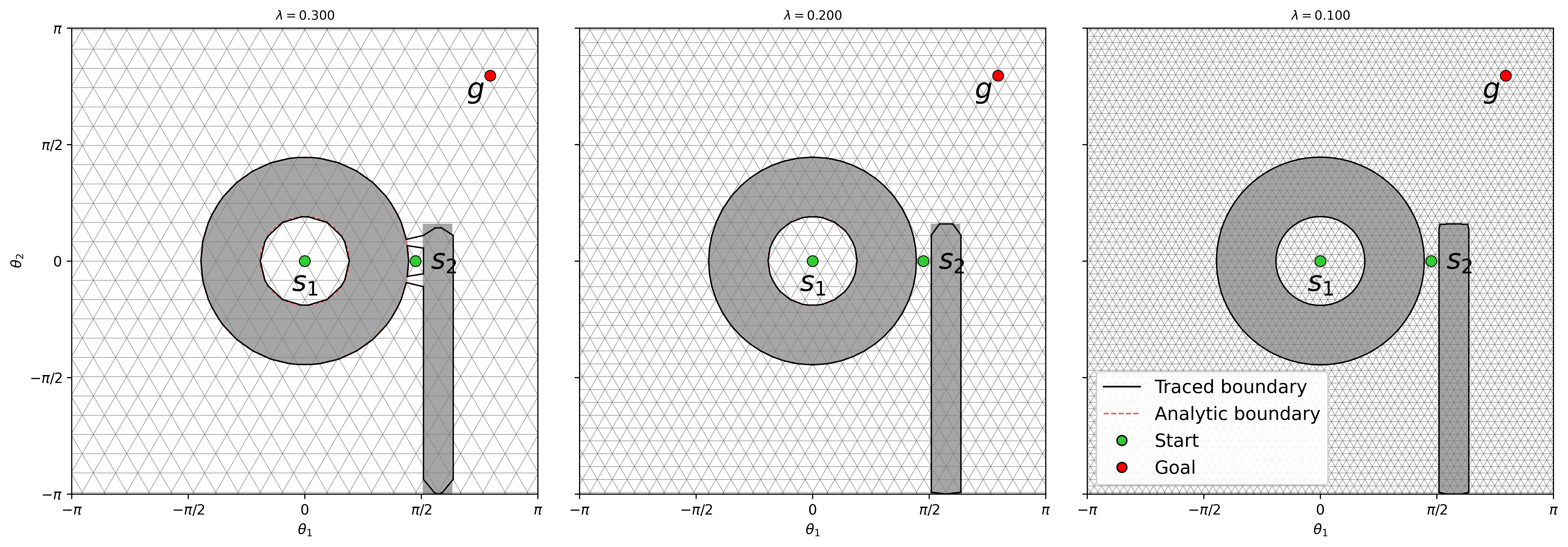}
    \vspace{-0.6cm}
    \caption{\RR{Ablation on $\lambda$ values: A lower $\lambda = 0.1$ has more accuracy whereas a higher $\lambda = 0.3$ can trace narrow regions incorrectly, resulting in false infeasibility detection for $s_2$ and $g$.}}
    \vspace{-0.3cm}
    \label{fig:analytic}
\end{figure}

\begin{figure*}[h]
\centering
\subfloat[4-DOF Robotic Arm.]{\includegraphics[trim=480 0 600 0,clip,scale=0.093]{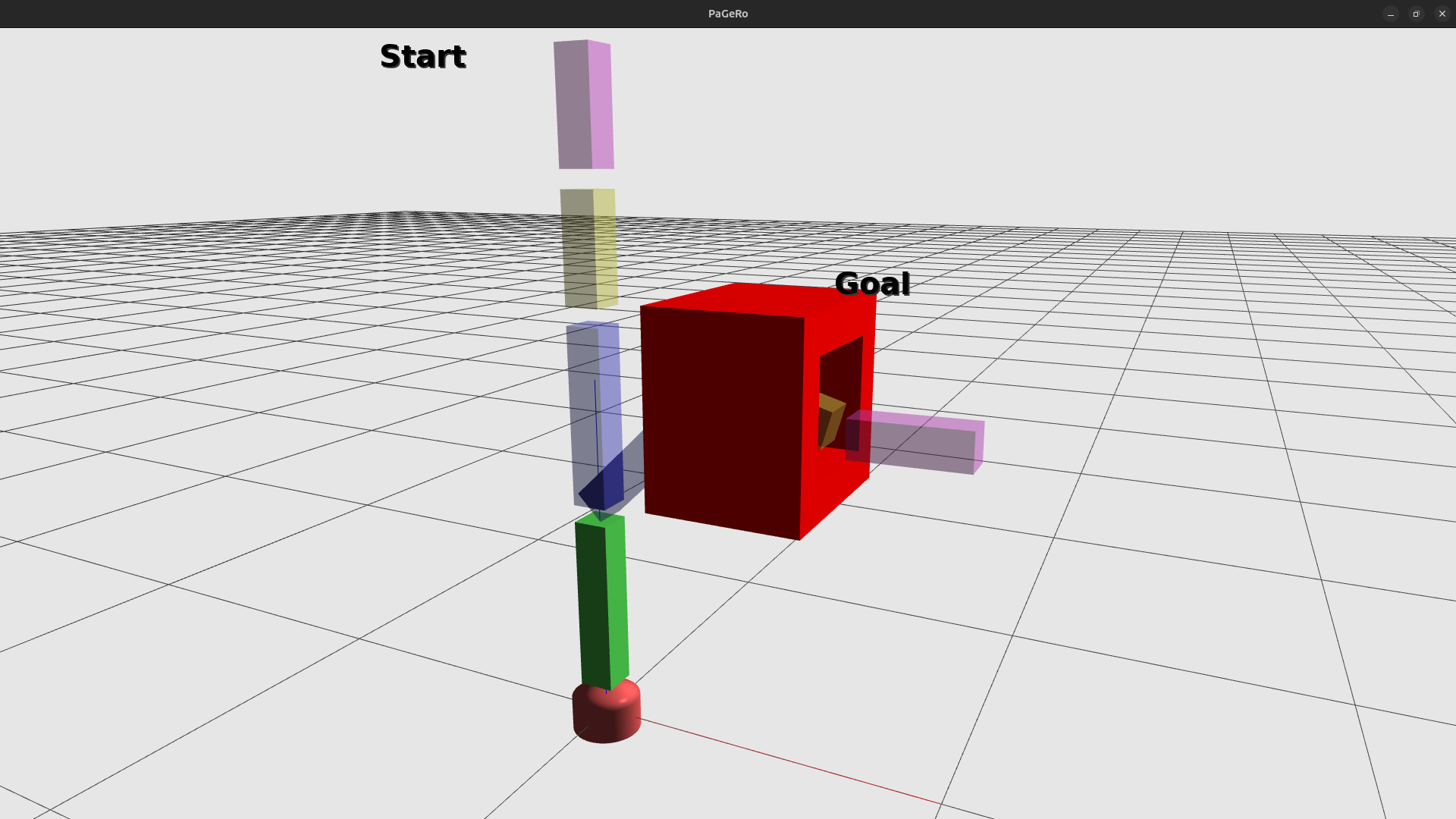} \label{fig:sim:4dofarm}}
\subfloat[Shoulder Joint.]{\includegraphics[trim=400 0 650 0,clip,scale=0.093]{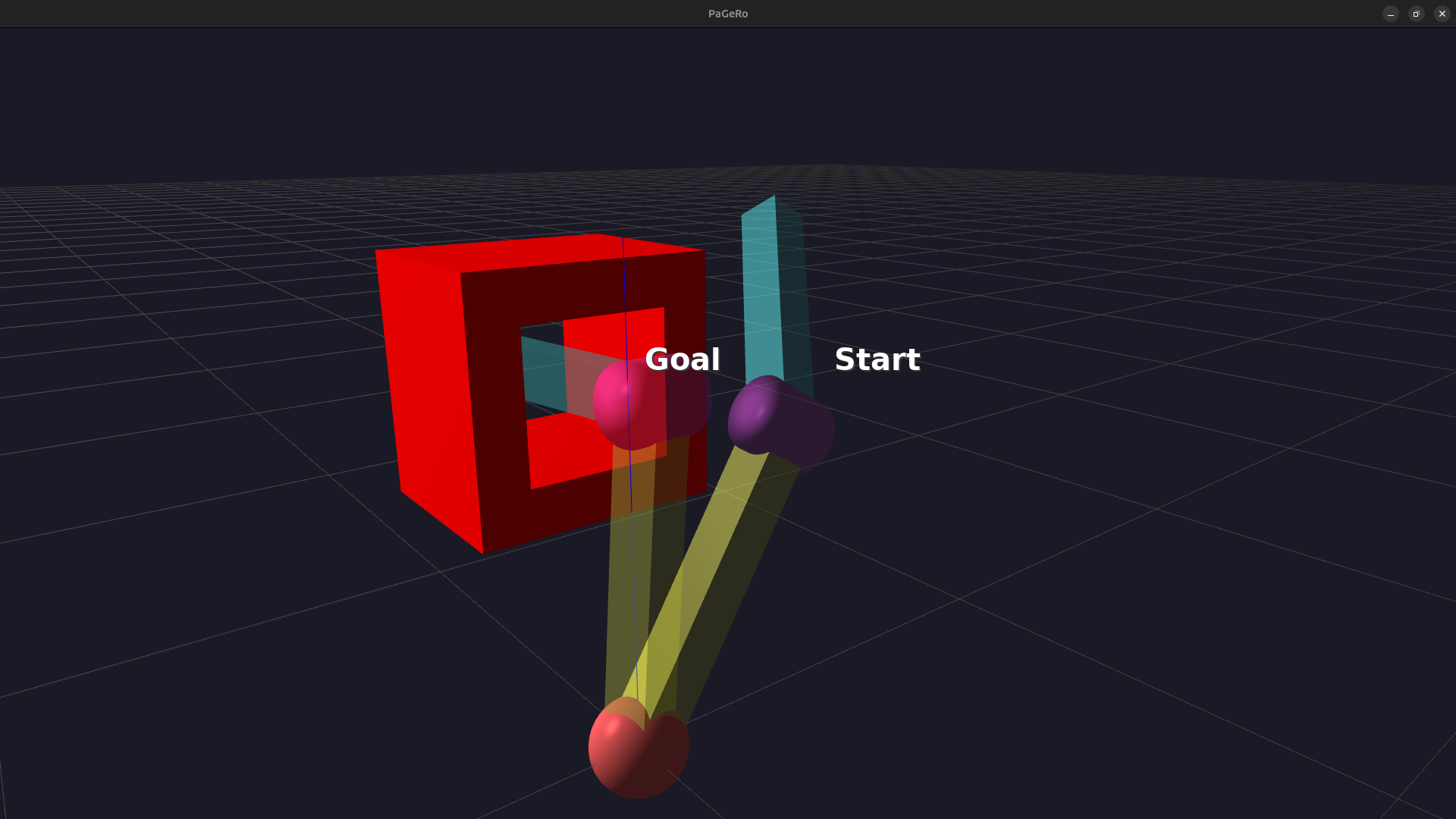} \label{fig:sim:shoulderbot}}
\subfloat[SCARA-like.]{\includegraphics[trim=600 0 600 0,clip,scale=0.093]{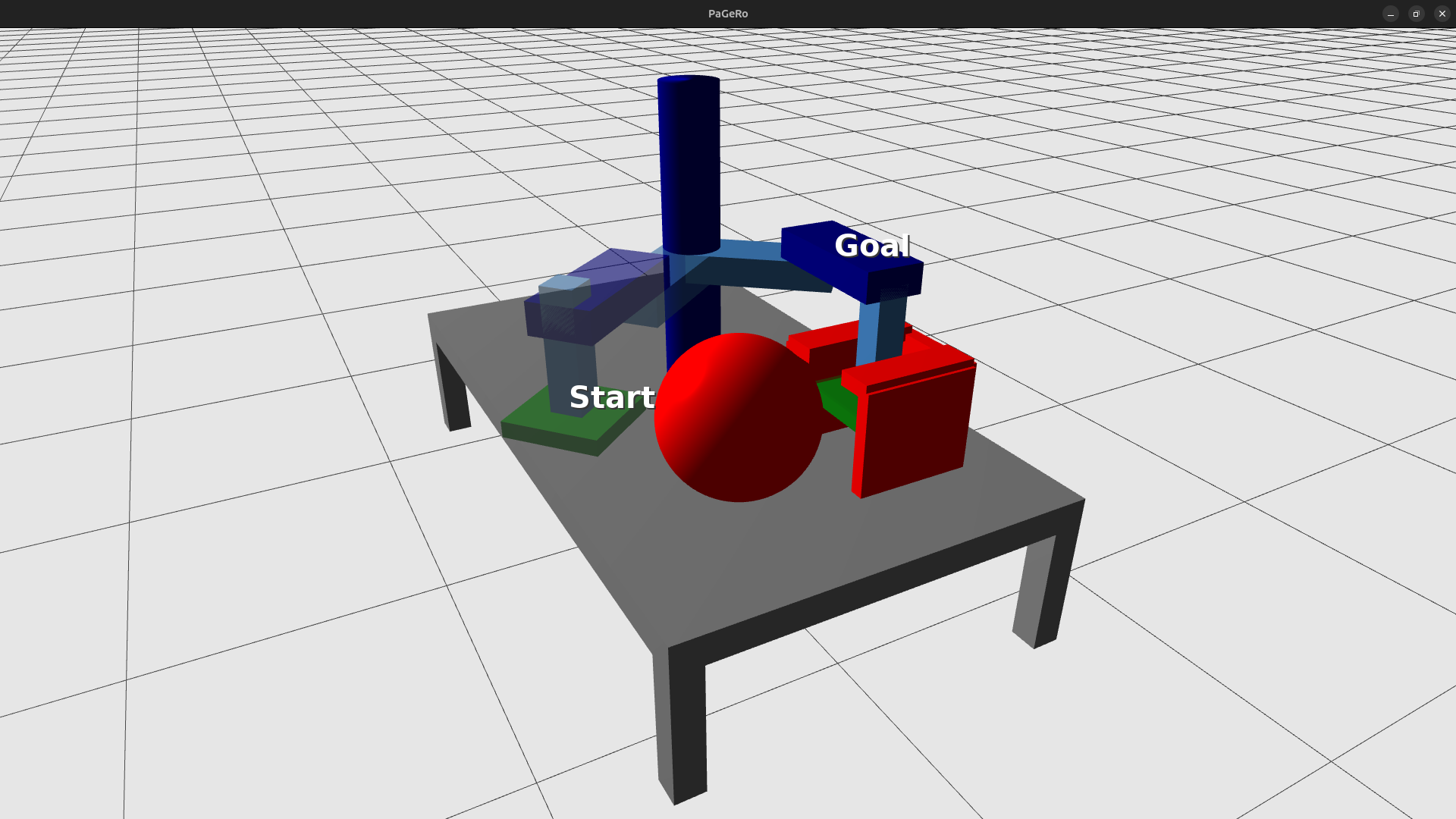} \label{fig:sim:scara}}
\subfloat[Universal scenario.]{\includegraphics[trim=150 0 300 0,clip,scale=0.093]{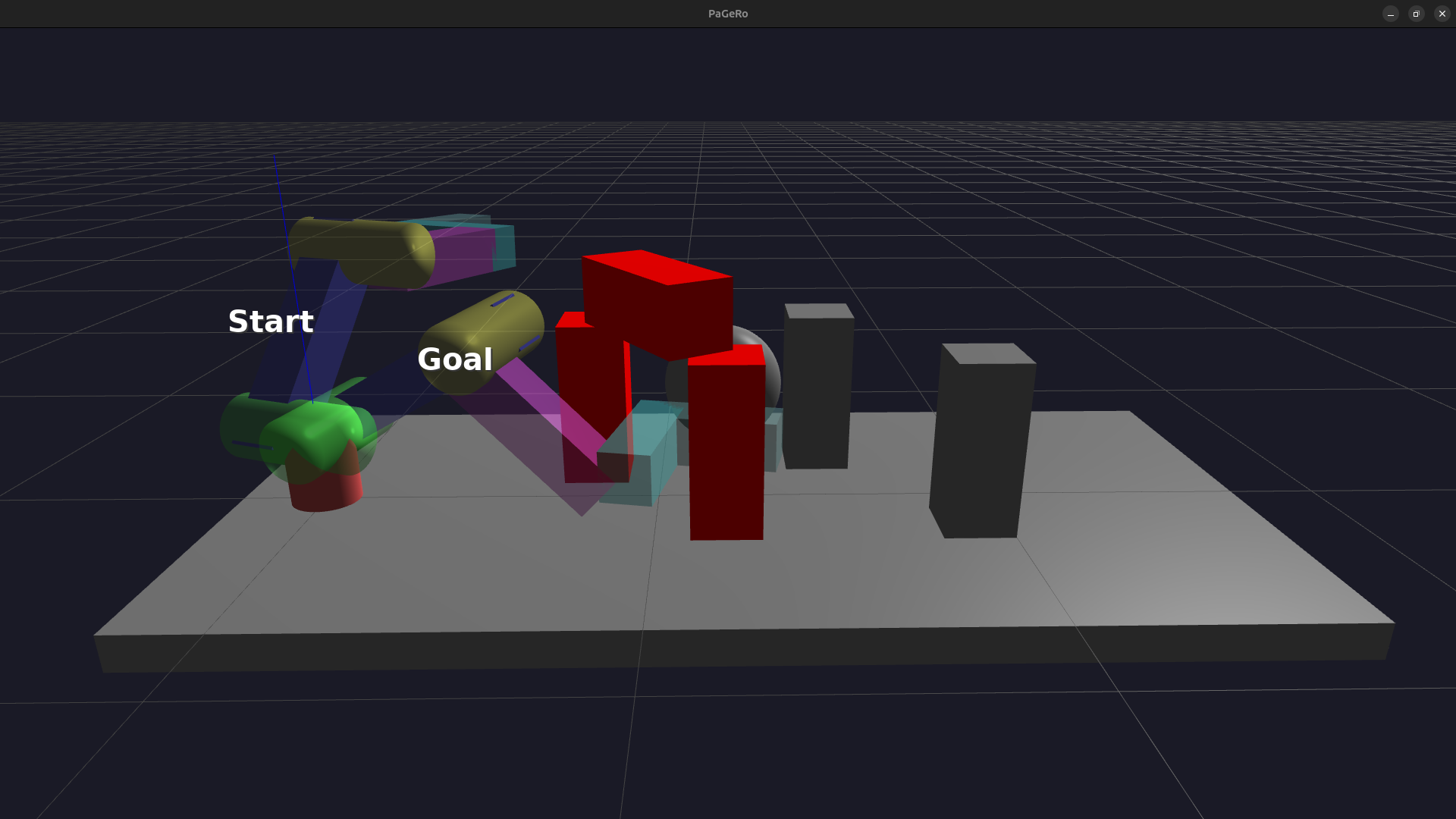} \label{fig:sim:universal}}
\subfloat[PackBot scenario.]{\includegraphics[trim=200 0 280 0,clip,scale=0.093]{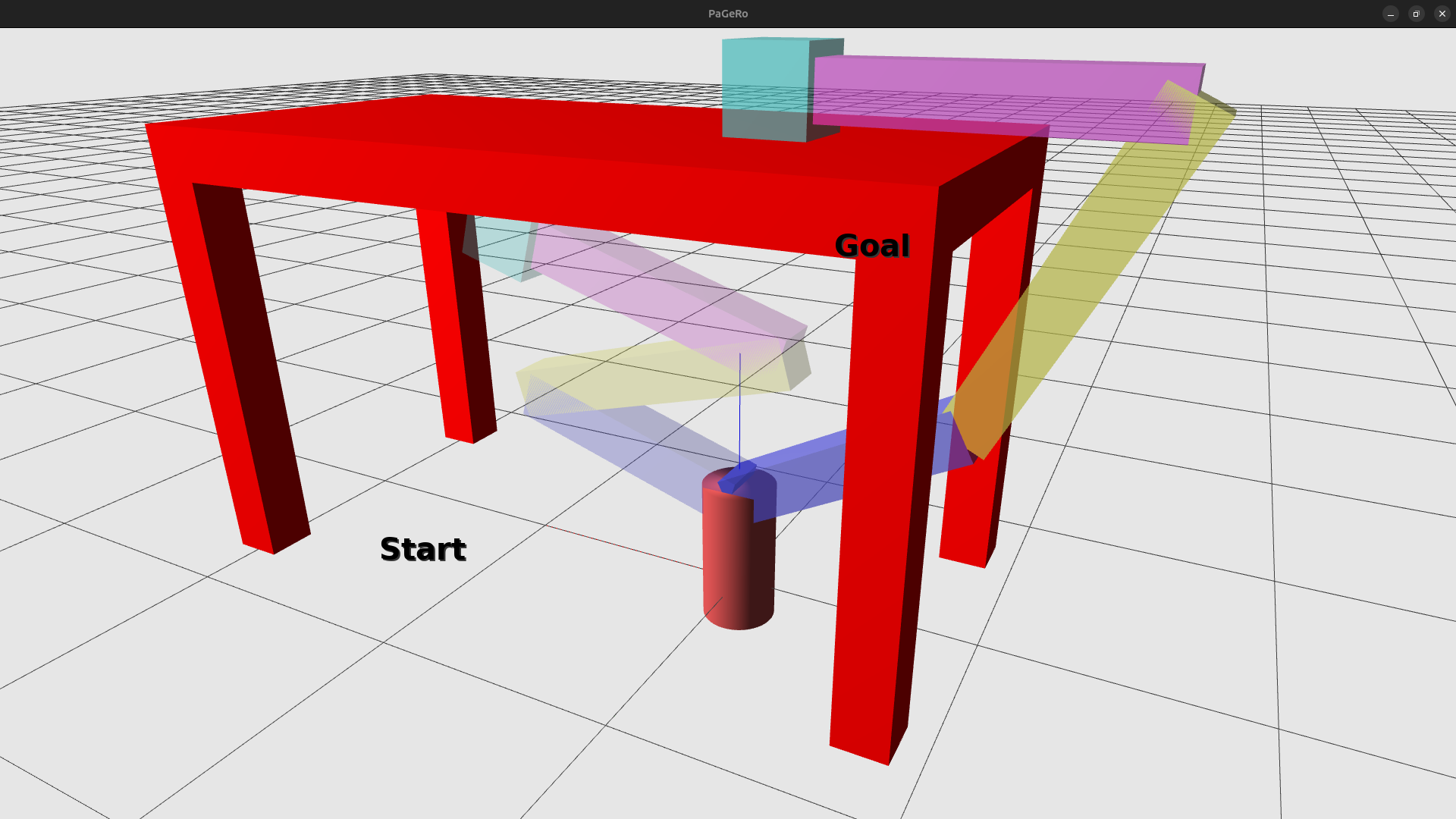} \label{fig:sim:packbot}}
\vspace{-0.2cm}
\caption{Experiment scenarios. The obstacles causing infeasibility have been highlighted in red.}
\label{fig:experiments}
\vspace{-0.5cm}
\end{figure*}

\section{Discussion, Limitations \& Future Work}
\label{sec:discussion_updated}
\MR{\subsection{Resolution-Dependence}}
\label{sec:res-comp}

The accuracy of the manifold tracing is governed by $\lambda$, which 
must be small enough to avoid missing obstacle boundaries while 
remaining large enough to ensure termination within memory constraints. 
A sign change across a simplex edge is guaranteed to be detected 
when the $\C{C}_{obs}$ width exceeds the circumradius of the 
$\tilde{A}_d$ triangulation $R = \sqrt{\frac{d(d+2)}{12(d+1)}}  \lambda =c \lambda$ \cite{MESH-COX}. We compute a heursitic maximum bound $\lambda_{max} = \rho_{\min}/ (c\sigma_{\text{max}}^{\text{global}})$ where $\rho_{\min}$ is min workspace width and $\sigma_{\max}^{\text{global}}$ is the maximum spectral norm of Jacobian. The effect of $\lambda$ on tracing accuracy is demonstrated in Figure~\ref{fig:analytic} for an analytically defined obstacle region in the configuration space. A formal proof that the $\lambda_{\max}$ guarantees detection of any obstacle boundary with workspace penetration depth exceeding is deferred to future work. The computed values for $\lambda_{\max}$ are 0.282 (4-DOF), 0.296 (Shoulder-Joint), 0.209 (SCARA), 0.190 (Universal), 0.101 (PackBot).l
\vspace{-10pt}
\subsection{Number of seed samples}
\SHORT{The number of ray samples ($N_r$ in Algorithm~\ref{alg:seed_generation})  directly affects both coverage and computational cost. Additional rays may intersect boundaries irrelevant to the infeasibility being analyzed, increasing execution time without benefit. Empirically, we find that more than 10 rays leads to a nearly twofold increase in execution time. In the PackBot scenario, we rely solely on boundary tracing intersection points and introduce no additional rays. These observations motivate careful selection of $N_r$ to 
balance coverage and efficiency.}
\vspace{-10pt}
\subsection{Boundary Size}
\label{subsec:boundary_size}
\SHORT{Execution time scales with the size and complexity of the obstacle boundary in configuration space. To study this dependency, we modify environments and 
analyze the resulting performance changes. In the SCARA scenario 
(Fig.~\ref{fig:sim:scara}), increasing the spherical obstacle radius (SCARA 
(small) vs. SCARA (big) in Table~\ref{tab:planning_time}) reduces execution time by approximately two seconds, as the enlarged obstacle shrinks the corresponding free-space connected component and reduces the extent of the boundary to be traced.

Importantly, the relationship between workspace obstacle size and configuration space boundary complexity is not monotonic, as it depends on both obstacle geometry and robot kinematics. For the 4-DOF arm scenarios (link 2 and link 3 in Table~\ref{tab:planning_time}), placing the frame farther from the robot shifts the dominant collision source to the third link, yielding a smaller induced boundary, whereas placing it closer makes the second link the primary source of collision, producing a larger boundary. Beyond obstacle size, spatial arrangement also affects boundary complexity: in the PackBot scenario (Fig.~\ref{fig:sim:packbot}), increasing the table height (PackBot (far)) reduces execution time by simplifying the induced configuration space boundary. Together, these examples illustrate how non-intuitive workspace changes can significantly affect configuration space structure and computational performance.}
\vspace{-5pt}

\subsection{Parallelization}
\SHORT{All computationally intensive components are parallelized on the GPU. Signed 
distance queries are executed via device-level CUDA functions, and forward 
kinematics is computed entirely on-device through a custom kinematic tree, 
eliminating CPU-GPU transfers during core computation. Obstacle and robot 
geometries are modeled using GPU-native primitives, keeping the full collision 
pipeline on-device. During frontier expansion, computed simplicial data is 
transferred to CPU memory after each iteration to free GPU memory for the 
next layer. Since only the current and previous frontier arrays are needed to 
detect revisited simplices, this bounds GPU memory usage independently of the 
total manifold size, enabling operation on consumer-grade hardware.}
\vspace{-7pt}
\subsection{Scalability}
\SHORT{For 6-DOF scenarios, the number of simplices grows exponentially, causing GPU 
memory exhaustion after only 10--15 frontier iterations. Addressing this remains 
an important direction for future work. Promising avenues include configuration 
space decomposition to reduce effective dimensionality \cite{QMP, QRRT}, 
simplified C-space representations \cite{CLIQUE-CONVEX}, more efficient atomic 
data structures \cite{BLEST}, improved hashing mechanisms \cite{HIGHWAYHASH, 
XXHASH}, adaptive triangulation techniques \cite{ADAP_TRI} and seed sampling strategies that initiate tracing from distant regions to accelerate coverage.}
\section{Conclusion} 
\label{sec:conclusion}
This work presents an algorithm for certifying path infeasibility between a given start and goal configuration. The approach traces obstacle-induced separating contact manifolds in the configuration space using a signed distance field representation, enabling direct reasoning about path-connectivity. By reconstructing obstacle boundaries corresponding to the zero level-set, the method determines whether a collision-free path exists between the configurations. The proposed algorithm is inherently parallelizable, allowing efficient exploration of high-dimensional spaces through a frontier-expansion strategy. In addition to detecting infeasibility, the method identifies the specific workspace obstacles responsible for the separation between start and goal configurations. This provides actionable insight into motion planning failure and can guide environment modifications to restore feasibility. The current implementation is limited to 5-dimensional configuration spaces. Scaling to higher-dimensional spaces (6+ DOF) will require algorithmic advances in memory-efficient simplicial reconstruction, which we identify as a key direction for future work, \RR{along with providing a stronger sense of actionability (e.g., finding the minimum number of obstacles to remove) and adaptive methodologies to address the limitations of a fixed global resolution}.
\vspace{-5mm}
\section*{Acknowledgment}
\RR{The authors are grateful to the reviewers for their insightful comments and suggestions. \MR{The authors would also like to thank, Siargey Kachanovich for his guidance with the manifold tracing implementation.

This work was supported by the International Institute of Information Technology Hyderabad (IIIT Hyderabad) under Grant No. IIIT/R\&D Office/Seed-Grant/2025-26/002.}}

\bibliographystyle{ieeetr}
\bibliography{references}

\end{document}